\newsavebox\mybox
\newlength\mylength
\newtheorem{theorem}{Theorem}
\newtheorem{lemma}{Lemma}
\theoremstyle{definition}
\DeclareMathOperator{\acc}{acc}
\DeclareMathOperator{\err}{err}
\DeclareMathOperator{\sign}{sign}
\DeclareMathOperator{\kl}{kl}
\DeclareMathOperator{\Bern}{Bern}
\DeclareMathOperator{\osc}{osc}
\DeclareMathOperator{\ball}{Ball}
\DeclareMathOperator{\supp}{supp}
\DeclareMathOperator{\ent}{Ent}
\DeclareMathOperator{\mi}{MI}
\DeclareMathOperator{\xadv}{x^{\text{adv}}}
\newtheorem{corollary}{Corollary}
\newtheorem{definition}{Definition}
\newtheorem{remark}{Remark}
\begin{document}

\twocolumn[
\icmltitle{Limitations of Adversarial Robustness: Strong No Free Lunch Theorem}



\icmlsetsymbol{equal}{*}

\begin{icmlauthorlist}
  \icmlauthor{Elvis Dohmatob}{aff}
\end{icmlauthorlist}

\icmlaffiliation{aff}{Criteo, Paris, France.}

\icmlcorrespondingauthor{Elvis Dohmatob}{e.dohmatob@criteo.com}

\icmlkeywords{boring formatting information, machine learning, ICML}

\vskip 0.3in
]



\printAffiliationsAndNotice 

\begin{abstract}
This manuscript presents some new impossibility results on adversarial robustness
in machine learning, a very important yet largely open problem. We show that if
conditioned on a class label the data distribution satisfies the $W_2$ Talagrand
transportation-cost inequality (for example, this condition is satisfied if the
conditional distribution has density which is log-concave; is the uniform
measure on a compact Riemannian manifold with positive Ricci curvature; etc.)
any classifier can be adversarially fooled with high probability once the
perturbations are slightly greater than the natural noise level in the problem.
We call this result The Strong "No Free Lunch" Theorem as some recent results
(Tsipras et al. 2018, Fawzi et al. 2018, etc.) on the subject can be immediately
recovered as very particular cases. Our theoretical bounds are demonstrated on
both simulated and real data (MNIST). We conclude the manuscript with some
speculation on possible future research directions.
\end{abstract}

\section{Introduction}
\label{sec:intro}
An adversarial attack operates as follows:
\begin{itemize}
\item A classifier is trained and deployed (e.g the road traffic sign
  recognition system on a self-driving car).  
\item At test / inference time, an attacker may submit queries to the
  classifier by sampling a data point $x$ with true label $k$, and
  modifying it $x \rightarrow x^{\text{adv}}$ according to a prescribed threat
  model. For example, modifying a few pixels on a road traffic sign
  ~\cite{onepixel}, modifying intensity of pixels by a limited amount determined
  by a prescribed tolerance level $\epsilon$ ~\cite{tsipras18}, etc. 
  $\epsilon$, on it.
\item The goal of the attacker is to fool the classifier into classifying
  $x^{\text{adv}}$ as label different from $k$.
\item A robust classifier tries to limit this failure mode, at a prescribed
  tolerance $\epsilon$.
\end{itemize}

\subsection{A toy example illustrating the fundamental issue}
\label{sec:toy}
To motivate things, consider the following "toy" problem from ~\cite{tsipras18},
which consists of classifying a target $Y \sim {\Bern}(1/2,\{\pm 1\})$ based on
$p\ge 2$ explanatory variables $X:=(X^1,X^2,\ldots,X^p)$ given by
$$X^1 | Y = \begin{cases}+Y,&\mbox{w.p } 70\%,\\
-Y,&\mbox{ w.p } 30\%,\end{cases}
$$
and
$X^j | Y \sim \mathcal N (\eta Y,1), \text{for } j=2,\ldots,p$, where $\eta \sim
p^{-1/2}$ is a fixed scalar which (as we wll see) controls the difficulty of the
problem. Now, as was shown in ~\cite{tsipras18}, the above problem can be solved
perfectly with generalization accuracy $\approx 100\%$, but the "champion"
estimator can also be fooled, perfectly! Indeed, the linear estimator given by
$h_{\text{avg}}(x) := \sign({w}^Tx)$ with ${w} = (0, 1/(p-1), \ldots, 1/(p-1))
\in \mathbb R^p$, where we allow an attacked to modify each feature by an amount
at moust $\epsilon \approx 2\eta$, has the afore-mentioned properties. Indeed,
using basic tail bounds for the Gaussian distributions, one can show that for
any $\delta \in (0, 1]$ the following hold
\begin{itemize}
  \item The standard accuracy of the linear model $h_{\text{avg}}$ is at least
    $1 - \delta$ if $\eta \ge \sqrt{2\log(1/\delta)/(p-1)}$, and
  \item This same model's adversarial accuracy is at most $\delta$ for $\epsilon
    \ge \eta + \sqrt{2\log(1/\delta)/(p-1)}$
  \end{itemize}
 See ~\cite{tsipras18} for details (or see supplemental).

By the way, we note that an optimal adversarial attack can be done by taking
$\Delta x^1=0$ and $\Delta x^j = -\epsilon y$ for all $j=2,\ldots,p$.
\paragraph*{An autopsy of what is going on.}
Recall that the entropy of a univariate Gaussian is $\ent(\mathcal
N(\mu,\sigma))=\ln(\sqrt{2\pi\sigma e})$ nats. Now, for $j=2,3,\ldots,p$, the
distribution of feature $X^j$ is a Gaussian mixture $\frac{1}{2}\sum_{Y=\pm
  1}\mathcal N(\eta Y,1)$ and so one computes the mutual information between
$X^j$ and the class label $Y$ as
\begin{eqnarray*}
\begin{split}
  &\mi(X^j;Y) := \ent(X^j)-\ent(X^j|Y)\\
  &=\ent \left(\frac{1}{2}\sum_{y=\pm 1}\mathcal N(\eta
    y,1)\right) - \frac{1}{2}\sum_{y=\pm1}\ent(\mathcal N(\eta y, 1))\\
  &=\ln(\sqrt{2\pi e})+\eta^2 - r - 2(1/2)\ln(\sqrt{2\pi e})
  = \eta^2-r \le \eta^2,
\end{split}
\end{eqnarray*}
where (see ~\cite{Michalowicz08} for the details)
\begin{eqnarray*}
  r:=\frac{2}{\sqrt{2\pi}\eta}e^{-\eta^2/2}\int _0^\infty
  e^{-\frac{z^2}{2\eta^2}}\cosh(z)\ln(\cosh(z))dz \ge 0.
\end{eqnarray*}
Thus $\mi(X^j;Y) \le \eta^2$. 
Since $\eta^2 \sim 1/p$, we conclude that these features barely share any
information with the target variable $Y$. Indeed, ~\cite{tsipras18} showed
improved robustness on the above problem, with feature-selection based on
mutual information.




\paragraph{Basic ``No Free Lunch'' Theorem.}
Reading the information calculations above, a skeptic could point out that the
underlying issue here is that the estimator $h_{\text{avg}}$ over-exploits the
fragile / non-robust variables $X^2,\ldots,X^p$ to boost ordinary generalization
accuracy, at the expense of adversarial robustness. However, it was rigorously
shown in ~\cite{tsipras18} that on this particular problem, every estimator is
vulnerable. Precisely, the authors proved the following basic ``No Free Lunch''
theorem.

\begin{theorem}[Basic No Free Lunch, ~\cite{tsipras18}] For the problem above,
  \textbf{any} estimator which has ordinary accuracy \textbf{at least} $1 -
  \delta$ must have robust adversarial robustness accuracy \textbf{at most}
  $7\delta/3$ against $\ell_\infty$-perturbations of maximum size $\epsilon \ge
  2\eta$.
\end{theorem}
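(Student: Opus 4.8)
The plan is to exhibit a single, explicit attack that already defeats any accurate classifier, so that no supremum over the whole $\ell_\infty$-ball is needed. Guided by the remark following the toy example, I would fix the perturbation $\Delta X^1 = 0$ and $\Delta X^j = -2\eta Y$ for $j = 2,\dots,p$, producing the attacked point $\tilde X := (X^1,\, X^2 - 2\eta Y,\, \dots,\, X^p - 2\eta Y)$. This is admissible because $\|\tilde X - X\|_\infty = 2\eta \le \epsilon$, and leaving $X^1$ untouched is forced, since flipping its sign would cost $2 \gg \epsilon$. Writing $h$ for any classifier valued in $\{\pm 1\}$, the adversarial accuracy at tolerance $\epsilon$ is at most $\mathbb{P}[h(\tilde X) = Y]$, because robustness at $X$ already requires this one attack to fail. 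Hence it suffices to prove $\mathbb{P}[h(\tilde X) = Y] \le 7\delta/3$.

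The heart of the argument is that the shift turns the conditionally-Gaussian block into that of the opposite label: given $Y = y$, the coordinates $X^{2:p} \sim \mathcal N(\eta y,1)$ become $\tilde X^{2:p} \sim \mathcal N(-\eta y,1)$, which is exactly the law of $X^{2:p}$ given $Y = -y$. Since $X^1$ is left alone and the coordinates are conditionally independent given $Y$, I would record two exact conditional-law identities obtained by conditioning on the value of $X^1$ (which equals $Y$ w.p.\ $0.7$ and $-Y$ w.p.\ $0.3$):
\begin{align*}
\mathrm{Law}(\tilde X \mid Y=-y,\, X^1=-y) &= \mathrm{Law}(X \mid Y=y,\, X^1=-y),\\
\mathrm{Law}(\tilde X \mid Y=y,\, X^1=-y) &= \mathrm{Law}(X \mid Y=-y,\, X^1=-y).
\end{align*}
Each is verified by matching the first coordinate (deterministic after conditioning) and the Gaussian block. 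Intuitively, an attacked class-$y$ point is statistically indistinguishable from a \emph{genuine} class-$(-y)$ point carrying the same first feature, so $h$ is forced to commit exactly the errors it makes on the flipped class.

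Turning the identities into numbers, I would parametrize natural performance by the two conditional error masses $E^{c} := \sum_{z} \mathbb{P}[h(X)\ne z \mid Y=z,\, X^1=z]$ and $E^{w} := \sum_z \mathbb{P}[h(X)\ne z \mid Y=z,\, X^1=-z]$, the errors committed when the robust feature $X^1$ is correct, respectively wrong. Decomposing over $X^1$ expresses the standard error directly as $\mathbb{P}[h(X)\ne Y] = 0.35\,E^{c} + 0.15\,E^{w} \le \delta$; the analogous decomposition of $\mathbb{P}[h(\tilde X)=Y]$, after substituting the two identities (and using that $h$ is $\{\pm1\}$-valued) to rewrite each conditional term as a natural conditional error, collapses to $\mathbb{P}[h(\tilde X)=Y] = 0.15\,E^{c} + 0.35\,E^{w}$. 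Maximizing the latter subject to the former over $E^c, E^w \ge 0$ is a two-variable linear program whose optimum sits at $E^c = 0$, $E^w = 20\delta/3$, giving the claimed $\mathbb{P}[h(\tilde X)=Y] \le (0.35/0.15)\,\delta = 7\delta/3$.

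The step I expect to be delicate is precisely this final bookkeeping. A naive bound that uses only the first identity and controls the residual term $\mathbb{P}[h(\tilde X)=Y,\, X^1=-Y]$ by its trivial ceiling $0.3$ overshoots the target by an additive constant. The point is that one must deploy \emph{both} conditional-law identities, so that this residual is itself revealed to equal $0.15\,E^{c}$ and hence is small; getting the weights $0.35$ and $0.15$ attached to the correct error mass, in the order swapped between the natural and the adversarial accounts, is exactly what produces the sharp factor $7/3$.
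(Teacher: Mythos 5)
Your proof is correct, and it checks out in detail: the shift $\Delta X^j=-2\eta Y$ for $j\ge 2$ maps $\mathrm{Law}(X^{2:p}\mid Y=y)$ onto $\mathrm{Law}(X^{2:p}\mid Y=-y)$, your two conditional-law identities hold, the decompositions $\Pr[h(X)\ne Y]=0.35E^c+0.15E^w$ and $\Pr[h(\tilde X)=Y]=0.35E^w+0.15E^c$ are right, and the linear program gives exactly $\tfrac{0.35}{0.15}\delta=\tfrac{7}{3}\delta$ (indeed $0.35E^w+0.15E^c\le \tfrac{7}{3}\left(0.15E^w+0.35E^c\right)$ holds pointwise, so you do not even need to worry about the corner where $E^w$ hits its ceiling). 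Be aware, though, that the paper itself never proves this theorem: it quotes it from Tsipras et al., and the appendix only reproves the two accuracy claims for the specific linear classifier $h_{\text{avg}}$. The paper's own route to results of this type is genuinely different: it shows that the (Gaussian) class-conditional laws satisfy a Talagrand $\text{T}_2$ inequality, invokes Marton's blowup lemma, and deduces a concentration bound of the form $\acc_{\epsilon}(h|k)\le e^{-\frac{p}{2\sigma_k^2}(\epsilon-\epsilon(h|k)/\sqrt{p})^2}$, which captures the same impossibility phenomenon for a much broader class of distributions but does not literally reproduce the constant $7/3$, and it treats the error set abstractly rather than exploiting the discrete robust feature $X^1$ as your coupling does. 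In short, your argument is the elementary, problem-specific one that yields the sharp constant; the paper's machinery trades that sharpness for generality across all $\text{T}_2$ distributions and threat models.
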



\subsection{Highlight of our main contributions}
In this manuscript, we prove that under some ``curvature conditions'' (to be
precised later) on the conditional density of the data, it holds that
  
For geodesic / faithful attacks:
  \begin{itemize}
  \item Every (non-perfect) classifier can be adversarially fooled with high
    probability by moving sample points an amount less than a critical value,
    namely
    $$
    \epsilon(h|k) :=\sigma_k\sqrt{2\log(1/\err(h|k))} \approx \sigma_k\Phi^{-1}(\acc(h|k))
    $$
    along the data manifold, where $\sigma_k$ is the ``natural noise level'' in
    the data points with class label $k$ and $\err(h|k)$ generalization error of
    the classifier in the non-adversarial setting
    
  \item Moreover, the average distance of a
    sample point of true label $k$ to the error set is upper-bounded by
    $$
    \epsilon(h|k) + \sigma_k\sqrt{\frac{\pi}{2}} =
    \sigma_k\left(\Phi^{-1}(\acc(h|k))+\sqrt{\frac{\pi}{2}}\right)
    $$
  \end{itemize}
For attacks in flat space $\mathbb R^p$:
\begin{itemize}  
  \item In particular, if the data points live in $\mathbb R^p$, where $p$ is
  the number of features), then every classifier can be adversarially fooled
  with high probability, by changing each feature by an amount less than a
  critical value, namely
  \[
    \begin{split}
      \epsilon_\infty(h|k) &:= \sigma_k\sqrt{2\log(1 /
        \err(h|k))/p}\\
      &\approx \frac{\sigma_k}{\sqrt{p}}\Phi^{-1}(\acc(h|k)).  
    \end{split}
  \]
\item Moreover, we have the bound
  \[
    \begin{split}    
    d(h|k) &\le \epsilon_\infty(h|k) + \frac{\sigma_k}{\sqrt{p}}
    \sqrt{\frac{\pi}{2}}\\
    &\approx \frac{\sigma_k}{\sqrt{p}}\left(\Phi^{-1}(\acc(h|k)) +
      \sqrt{\frac{\pi}{2}}\right).
    \end{split}
  \]
\end{itemize}

In fact, we prove similar results for $\ell_1$ (reminiscent of few-pixel attacks
~\cite{onepixel}) and even any $\ell_s$ norm on $\mathbb R^p$. We call these
results The Strong ``No Free Lunch'' Theorem as some recent results  (e.g
~\cite{tsipras18,fawzi18,gilmerspheres18}), etc.)  on the subject can be
immediately  recovered as very particular cases. Thus adversarial
(non-)robustness should really be thought of as a measure of complexity of a
problem. A similar remark has been recently made in ~\cite{bubeck2018}.

The sufficient ``curvature conditions'' alluded to above imply
\textit{concentration of measure phenomena}, which in turn imply our
impossibility bounds. These conditions are satisfied in a large number of
situations, including cases where the class-conditional distribution is the
volume element of a compact Riemannian manifold with positive Ricci curvature;
the class-conditional data distribution is supported on a smooth manifold and has
log-concave density w.r.t the curvature of the manifold; or the manifold is
compact; is the \textit{pushforward} via a Lipschitz continuous map, of another
distribution which verifies these curvature conditions; etc.
\subsection{Notation and terminology}
$\mathcal X$ will denote the feature space and $\mathcal Y:=\{1,2,\ldots,K\}$
will be the set of class labels, where $K \ge 2$ is the number of classes, with
$K=2$ for binary classification. $P$ will be the (unknown) joint probability
distribution over $\mathcal X \times \mathcal Y$, of two prototypical random
variables $X$ and $Y$ referred to the features and the target variable, which
take values in $\mathcal X$ and $\mathcal Y$ respectively. Random variables will
be denoted by capital letters $X$, $Y$, $Z$, etc., and realizations thereof will
be denoted $x$, $y$, $z$, etc. respectively.

For a given class label $k \in \mathcal Y$, $\mathcal X_k \subseteq \mathcal X$ will
denote the set of all samples whose label is $k$ with positive probability under
$P$. It is the support of the restriction of $P$ onto the plane $\mathcal X
\times \{k\}$. This restriction is denoted $P_{X|Y=k}$ or just $P_{X|k}$, and
defines the \text{conditional} distribution of the features $X$ given that the
class label has the value $k$. We will assume that all the $\mathcal X_k$'s are
finite-dimensional smooth Riemannian manifolds. This is the so-called
\textit{manifold assumption}, and is not unpopular in machine learning
literature. A classifier is just a \textit{measurable} mapping $h: \mathcal X
\rightarrow \mathcal Y$, from features to class labels.


\paragraph*{Threat models.}
Let $d_{\mathcal X}$ be a distance / metric on the input space  $\mathcal
X$ and $\epsilon \ge 0$ be a tolerance level. The  $d_{\mathcal X}$
\textit{threat model} at tolerance $\epsilon$ is a scenario where the attacker
is allowed to perturb any input point $x \mapsto \xadv$, with the constraint
that $d_{\mathcal X}(\xadv,x) \le \epsilon$. When $\mathcal X$ is a manifold,
the threat model considered will be that induced by the \textit{geodesic
  distance}, and will be naturally referred to as the \textit{geodesic threat model}.
\paragraph*{Flat threat models.}
  In the special case of euclidean space $\mathcal X = \mathbb R^n$,
  we will always consider the distances defined for $q \in [1, \infty]$ by
$d(x,z) = \|x-z\|_q$, where
\begin{eqnarray}
  \|a\|_q :=
  \begin{cases}
    \left(\sum_{j=1}^p|a^j|^q \right)^{1/q},&\mbox{ if }1 \le q < \infty,\\
    \max\{|a^1|,\ldots,|a^p|\},&\mbox{ if }q = \infty.
  \end{cases}
\end{eqnarray}
The $\ell_\infty$ / \textit{sup} case where $q=\infty$~\cite{tsipras18} is
particularly important: the corresponding threat model allows the adversary
to separately increase or decrease each feature by an amount at most $\epsilon$.
The \textit{sparse} case $q=1$ is a convex proxy for so-called ``few-pixel''
attacks~\cite{onepixel} wherein the total number of features that can be
tampered-with by the adversary is limited.

\paragraph*{Adversarial robustness accuracy and error.} The
\textit{adversarial robustness accuracy} of $h$ at tolerance $\epsilon$ for a
  class label $k \in \mathcal Y$ and w.r.t the $d_{\mathcal X}$ threat model,
  denoted $\acc_{d_{\mathcal X},\epsilon}(h|k)$, is defined by
\begin{eqnarray}
  \acc_{d_{\mathcal X},\epsilon}(h|k) :=
    P_{X|k}(h(x')=k \;\forall x'\in \ball_{\mathcal X}(X; \epsilon)).
\end{eqnarray}
This is simply the probability no sample point $x$ with true class label $k$
can be perturbed by an amount $\le \epsilon$ measured by the distance
$d_{\mathcal X}$, so that it get misclassified by $h$. This is an adversarial
version of the standard class-conditional accuracy $\acc(h) = P_{(X,Y)}(h(X)=Y)$
corresponding to $\epsilon = 0$. The corresponding \textit{adversarial
  robustness error} is then $\err_\epsilon(h|k) := 1-\acc_\epsilon(h|k)$.
This is the adversarial analogue of the standard notion of the class-conditional
\textit{generalization / test error}, corresponding to $\epsilon = 0$.

Similarly, one defines the \textit{unconditional adversarial accuracy}
  \begin{eqnarray}
    \acc_\epsilon(h) = P_{(X,Y)}(h(x')=Y\;\forall x' \in
    \operatorname{Ball}_{\mathcal X}(X;\epsilon)),
  \end{eqnarray}
  which is an adversarial version of the standard accuracy
  $\acc(h) = P_{(X,Y)}(h(X)=Y)$.
  Finally, \textit{adversarial robustness radius} of $h$ on class $k$
  \begin{eqnarray}
    d(h|k) := \mathbb E_{X|k}[d(X, B(h,k))],
    \end{eqnarray}
where $B(h,k):=\{x \in \mathcal X | h(x) \ne k\}$ is the set of inputs
classified by $h$ as being of label other than $k$. Measureablity of $h$ implies
that $B(h,k)$ is a Borel subset of $\mathcal X$. $d(x, k)$ is nothing but the
average distance of a sample point $x \in \mathcal X$ with true label $k$, from
the set of samples classified by $h$ as being of another label. The smaller the
value of $d(h|k)$, the less robust the classifier $h$ is to adversarial attacks
on samples of class $k$.
  
\begin{remark}
By the properties of expectation and conditioning, it holds that
$\min_{k}\acc_{\epsilon}(h|k) \le \acc_{\epsilon}(h) = \mathbb
E_{Y}[\acc_{\epsilon}(h|Y)] = \sum_{k=1}^K\pi_{k}\acc_{\epsilon}(h|k)
\le \max_k\acc_{\epsilon}(h|k)$, where $\pi_k := P(k)$. Thus, bounds on the
$\acc_\epsilon(h|k)$'s imply bounds on $\acc_\epsilon(h)$.
\end{remark}

  \subsection{Rough organization of the manuscript}
  In section \ref{sec:toy}, we start off by presenting a simple motivating
  classification problem from ~\cite{tsipras18}, which as shown by the
  authors, already exhibits the ``No Free Lunch'' issue. In section
  \ref{sec:term} we present some relevant notions from geometric probability
  theory which will be relevant for our work, especially Talagrand's
  \textit{transportation-cost} inequality and also Marton's blowup Lemma.
Then in section \ref{sec:nfl}, we present the main result of this
manuscript, namely, that on a rich set of distributions no classifier
can be robust even to modest perturbations (comparable to the natural noise
level in the problem). This generalizes the results of
~\cite{tsipras18,gilmerspheres18} and to some extent, ~\cite{fawzi18}.
Our results also extend to the distributional robustness setting (no highlighted
here but is presented in the Appendix \ref{sec:dr}).

An in-depth presentation of related works is given in section
\ref{sec:related-work}.
Section \ref{sec:experiments} presents experiments on both simulated and real
data that confirm our theoretical results. Finally, section \ref{sec:conclusion}
concludes the manuscript with possible future research directions.

All proofs are presented in Appendix \ref{sec:proofs}.

\section{The theorems}
\subsection{Terminology and background}
\label{sec:term}
\paragraph{Neighborhood of a set in a metric space.}
  The $\epsilon$-blowup (aka $\epsilon$-neighborhood, aka $\epsilon$-fattening,
  aka $\epsilon$-enlargement) of a subset $B$ of a metric space $\mathcal
  X=(\mathcal X, d_{\mathcal X})$, denoted $B_{\mathcal X}^\epsilon$, is defined
  by $B_{\mathcal X}^\epsilon := \{x \in \mathcal X | d_{\mathcal X}(x, B) \le
  \epsilon\}$, where $d_{\mathcal X}(x,B) := \inf\{d_{\mathcal X}(x,y)|y \in
  B\}$ is the distance of $x$ from $B$.
   Note that $B_{\mathcal X}^\epsilon$ is an increasing function of both $B$ and
   $\epsilon$; that is, if $A \subseteq B \subseteq \mathcal X$ and $0 \le
   \epsilon_1 \le \epsilon_2$, then $A \subseteq A_{\mathcal X}^{\epsilon_1}
   \subseteq B_{\mathcal X}^{\epsilon_1} \subseteq B_{\mathcal X}^{\epsilon_2}$.
   In particular, $B_{\mathcal X}^0 = B$ and $B_{\mathcal X}^\infty = \mathcal X$.
   Also observe that each $B_{\mathcal X}^\epsilon$ can be rewritten in the form
   $B_{\mathcal X}^\epsilon
       = \bigsqcup_{x \in B}\operatorname{Ball}_{\mathcal X}(x;\epsilon)$,
   where
   $ \operatorname{Ball}_{\mathcal X}(x;\epsilon) := \{x' \in \mathcal X |
   d_{\mathcal X}(x',x) \le \epsilon\}$ the closed ball in $\mathcal X$ with
   center $x$ and radius $\epsilon$. Refer to Fig. \ref{fig:blowup}.
\begin{figure}[!h]
  \includegraphics[width=.5\textwidth]{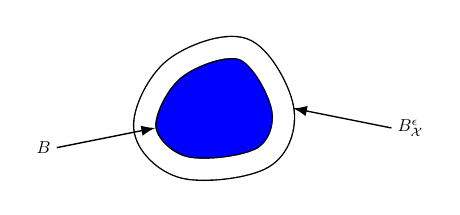}
  \caption{$\epsilon$-blowup of a subset $B$ of a metric space $\mathcal X$.
    \label{fig:blowup}
  }
\end{figure}
In a bid to simplify notation, when there is no confusion about the underlying
metric space, we will simply write $B^{\epsilon}$ for $B_{\mathcal X}^\epsilon$.
When there is no confusion about the the underlying set $\mathcal X$ but not the
metric thereupon, we will write $B_{d_{\mathcal X}}^\epsilon$. For example, in the
metric space $(\mathbb R^p,\ell_q)$, we will write $B_{\ell_q}^\epsilon$ instead
of $B_{(\mathbb R^p,\ell_q)}^\epsilon$ for the $\epsilon$-blowup of $B \subseteq
\mathbb R^p$.

An example which will be central to us is when $h: \mathcal X \rightarrow
\mathcal Y$ is a classifier, $k \in \mathcal Y$ is a class label, and we take
$B$ to be the ``bad set'' $B(h,k) \subseteq \mathcal X$ of inputs which are
assigned a label different from $k$, i.e
\begin{eqnarray}
  \begin{split}
    B(h,k):=\{x | h(x) \ne k\}
    = \bigsqcup_{k' \ne
    k}\{x| h(x) = k'\}.
  \end{split}
\end{eqnarray}
$B_{\mathcal X}^\epsilon=B(h,k)^\epsilon$ is then nothing but the event that there is data point
with a ``bad $\epsilon$-neighbor'', i.e the example can be missclassified by
applying a small perturbation of size $\le \epsilon$.
 This interpretation of blowups will be central in the sequel, and we will be
 concerned with lower-bounding the probability of the event $B(h,k)^\epsilon$
 under the conditional measure $P_{X|k}$. This is the proportion of points $x
 \in \mathcal X$ with true class label $k$, such that $h$ assigns a label $\ne
 k$ to some  $\epsilon$-neighbor $x' \in \mathcal X$ of $x$. Alternatively,  one
 could study the local \textit{robustness radii} $r_h(x,k) := \inf\{d(x',x) | x'
 \in \mathcal X,\;h(x') \ne k\} =: d(x,B(h,k))$, for $(x,k) \sim P$,  as was
 done in ~\cite{fawzi18}, albeit for a very specific problem setting
 (generative models with Guassian noise). More on this in section
 \ref{sec:related-work}. Indeed $r_h(x,k) \le \epsilon \iff x \in B(h,k)^\epsilon$.
 

 \subsection{Isoperimetric inequalities in general metric spaces}
 All known theoretical works ~\cite{tsipras18, schmidt2018, gilmerspheres18,
   fawzi18, saeed2018, goldstein} on the impossibility of adversarial robustness
 use the Gaussian isoperimetric inequality in euclidean space
 ~\cite{boucheron2013}. Via the classical work of L\'evy-Gromov, one can extend
 this to compact simply connected Riemannian manifolds with positive Ricci
 curvature. For general metric spaces, with which we are concerned, it is not
 even clear what should be the appropriate notion of curvature. Villani and
 co-workers have solved this issue by developing a so-called \emph{synthetic}
 theory of curvature, based on \emph{optimal transport}. The resulting geometric
 framework allows for an extension (due mostly to F. Otto and C. Villani, but
 also M. Talagrand and K. Marton) of L\'evy-Gromov theory to metric
 spaces~\cite{Villani}.
 
Fix a reference measure $\mu$ in $\mathcal P^2(\mathcal X)$, the
\textit{Wasserstein} space of all probability measures $\mu$ on the metric space
$\mathcal X$ with finite order moment, i.e such that there exists a point $a \in
\mathcal X$ with $\mathbb E_{X\sim \mu}[d_{\mathcal X}(a,X)^2] < \infty$. Let $c
\ge 0$.

\begin{definition}[$\text{T}_2(c$) property --a.k.a Talagrand $W_2$
  transportation-cost inequality] $\mu$ is said to satisfy $\text{T}_2(c)$ if
  for every other distribution $\nu$ on $\mathcal X$, which is absolutely
  continuous w.r.t $\mu$ (written $\nu \ll \mu$), one has
  \begin{eqnarray}
    W_2(\nu,\mu) \le \sqrt{2c\kl(\nu\|\mu)},
    \label{eq:t2}
  \end{eqnarray}
  where $W_2(\nu,\mu) := \left(\underset{\pi \in \Pi(\nu,\mu)}{\inf}\mathbb E_\pi
        [d_{\mathcal X}(X', X)^2]\right)^{1/2}$
is the Wasserstein $2$-distance between $\nu$ and $\mu$, with $\Pi(\nu,\mu)$
being the set of all couplings of $\nu$ and $\mu$; and
      $\kl(\nu\|\mu)$ is the entropy of $\nu$ relative to $\mu$, defined
        by $\kl(\nu\|\mu) = \int_{\mathcal X} \log(\frac{d\nu}{d\mu})d\mu$ if
        $\nu \ll \mu$ and $+\infty$ else.
\end{definition}
Note that if $0 \le c \le c'$, then $\text{T}_2(c) \subseteq \text{T}_2(c')$.
The inequality \eqref{eq:t2} in the above definition is a generalization of
the well-known \textit{Pinsker's inequality} for the total variation distance
between probability measures. Unlike Pinsker's inequality which holds
unconditionally, \eqref{eq:t2} is a privilege only enjoyed by special classes
of reference distributions $\mu$. These include: log-concave distributions
on manifolds (e.g multi-variate Gaussian), distributions on compact
Riemannian manifolds of positive Ricci curvature (e.g spheres, tori,
etc.), pushforwards of distributions that satisfy some $\text{T}_2$ inequality,
etc. In section \ref{sec:apps}, these classes of distributions will be discussed
in detail as sufficient conditions for our impossibility theorems.

\begin{definition}[BLOWUP($c$) property] $\mu$ is said to satisfy BLOWUP($c$) if
  for every Borel $B \subseteq \mathcal X$ with $\mu(B) > 0$ and for every
  $\epsilon \ge \sqrt{2c\log(1/\mu(B))}$, it holds that
  \begin{eqnarray}
    \mu(B^\epsilon) \ge 1 -
    e^{-\frac{1}{2c}(\epsilon-\sqrt{2c\log(1/\mu(B))})^2}.
    \label{eq:blowup}
  \end{eqnarray}
\end{definition}
It is a classical result that the Gaussian distribution on
$\mathbb R^p$ has BLOWUP($1$) and $\text{T}_2(1)$, a phenomenon known as
\textit{Gaussian isoperimetry}. 
These results date back to at least works of E. Borel, P. L\'evy, M. Talagrand
and of K. Marton ~\cite{boucheron2013}.
 
The following lemma is the most important tool we will use to
derive our bounds.

\begin{lemma}[Marton's Blowup lemma]
  On a fixed metric space, it holds that $\text{T}_2(c) \subseteq BLOWUP(c)$.
  \label{thm:blowup}
\end{lemma}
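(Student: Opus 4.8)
The plan is to run the classical two-set argument due to Marton. Fix a Borel set $B\subseteq\mathcal X$ with $\mu(B)>0$ and a radius $\epsilon\ge\sqrt{2c\log(1/\mu(B))}$, and write $A:=(B^\epsilon)^c=\{x\in\mathcal X\mid d_{\mathcal X}(x,B)>\epsilon\}$ for the complement of the blowup. Since $\mu(B^\epsilon)=1-\mu(A)$, establishing \eqref{eq:blowup} is equivalent to proving the upper bound $\mu(A)\le e^{-\frac{1}{2c}(\epsilon-\sqrt{2c\log(1/\mu(B))})^2}$. If $\mu(A)=0$ this is immediate (the right-hand side is nonnegative and $\mu(B^\epsilon)=1$), so I would assume $\mu(A)>0$ and introduce the two conditional (normalized restriction) measures $\mu_B:=\mu(\,\cdot\cap B)/\mu(B)$ and $\mu_A:=\mu(\,\cdot\cap A)/\mu(A)$, both of which are absolutely continuous with respect to $\mu$ and hence are admissible inputs to the $\text{T}_2(c)$ hypothesis.

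The geometric heart of the argument is that $A$ and $B$ are $\epsilon$-separated: every $a\in A$ satisfies $d_{\mathcal X}(a,b)>\epsilon$ for all $b\in B$, by definition of $A$. Since any coupling $\pi\in\Pi(\mu_A,\mu_B)$ is supported on $A\times B$, the integrand $d_{\mathcal X}(X',X)^2$ exceeds $\epsilon^2$ $\pi$-almost surely, which forces $W_2(\mu_A,\mu_B)\ge\epsilon$. I would then combine this with the triangle inequality for the Wasserstein metric on $\mathcal P^2(\mathcal X)$ (valid via the standard gluing lemma), namely $W_2(\mu_A,\mu_B)\le W_2(\mu_A,\mu)+W_2(\mu,\mu_B)$, together with the $\text{T}_2(c)$ inequality \eqref{eq:t2} applied separately to $\mu_A$ and to $\mu_B$, to obtain
\[
\epsilon\le W_2(\mu_A,\mu_B)\le\sqrt{2c\,\KL(\mu_A\|\mu)}+\sqrt{2c\,\KL(\mu_B\|\mu)}.
\]

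It then remains to evaluate the two relative entropies, which is where the normalized-restriction choice pays off: since $d\mu_B/d\mu=\mathbf{1}_B/\mu(B)$, a one-line computation gives $\KL(\mu_B\|\mu)=\log(1/\mu(B))$, and likewise $\KL(\mu_A\|\mu)=\log(1/\mu(A))$. Substituting yields $\sqrt{2c\log(1/\mu(A))}\ge\epsilon-\sqrt{2c\log(1/\mu(B))}$, and here the standing hypothesis $\epsilon\ge\sqrt{2c\log(1/\mu(B))}$ guarantees that the right-hand side is nonnegative, so squaring both sides is legitimate. Rearranging the resulting inequality for $\log(1/\mu(A))$ and exponentiating produces exactly the claimed bound on $\mu(A)$, and hence \eqref{eq:blowup}.

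The one genuinely clever move — and the only place where something could go wrong — is routing the estimate through the reference measure $\mu$ as a midpoint via the triangle inequality, so that the single-sided $\text{T}_2(c)$ bound can be deployed on the two conditionals rather than on the intractable pair $(\mu_A,\mu_B)$ directly; the separation lower bound $W_2(\mu_A,\mu_B)\ge\epsilon$ and the entropy identities are then routine, and the only care needed is the degenerate case $\mu(A)=0$ handled at the outset. I therefore expect the main obstacle to be purely one of verification (that $W_2$ is a bona fide metric and that the coupling support argument is rigorous), not of finding the right construction.
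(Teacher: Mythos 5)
Your proposal is correct and follows essentially the same route as the paper's own proof: normalized restrictions to $B$ and to the complement of $B^\epsilon$, the $\epsilon$-separation lower bound on $W_2$, the triangle inequality through the reference measure $\mu$, the $\text{T}_2(c)$ bound applied to each restriction, and the computation $\KL(\mu_B\|\mu)=\log(1/\mu(B))$. The only difference is cosmetic — you explicitly dispose of the degenerate case $\mu(A)=0$, which the paper leaves implicit.
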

\begin{proof}
  The proof is classical, and is provided in Appendix \ref{sec:proofs} for the
  sake of completeness.
\end{proof}
The Otto-Villani theory ~\cite{Villani} showed that there are even more primitive so-called
\emph{log-Sobolev} inequalities which imply $\operatorname{T}_2$ inequalities,
and therefore by Lemma \ref{thm:blowup}, imply measure concentration.

\subsection{Generalized ``No Free Lunch'' Theorem}
It is now ripe to present the main results of this manuscript.
\label{sec:nfl}

\begin{theorem}[Strong ``No Free Lunch'' on curved space]
Suppose that for some $\sigma_k > 0$, 
  $P_{X|k}$ has the $\text{T}_2(\sigma_k^2)$ property on the conditional
  manifold $\mathcal X_k:= \supp(P_{X|k}) \subseteq \mathcal X$. Given a
  classifier $h: \mathcal X \rightarrow \{1,2,\ldots,K\}$ for which $\acc(h|k) <
  1$ (i.e the classifier is not perfect on the class $k$), define
  \begin{eqnarray}
    \epsilon(h|k) := \sigma_k\sqrt{2\log(1/\err(h|k))}\approx\sigma_k\Phi^{-1}(\acc(h|k)).
    \label{eq:critical}
    \end{eqnarray}
    Then for the geodesic threat model, we have the bounds
      
    \textbf{\textit{(A)} Adversarial robustness accuracy:} If $\epsilon \ge
    \epsilon(h|k)$, then
  \begin{eqnarray}
    \acc_{\epsilon}(h|k) \le
    \min(\acc(h|k),e^{-\frac{1}{2\sigma_k^2}(\epsilon -
    \epsilon(h|k))^2}).
    \label{eq:bingo}
    \end{eqnarray}
 
\textbf{\textit{(B)} Bound on average distance to error set:}
  \begin{eqnarray}
    \begin{split}
      d(h|k) &\le \sigma_k\left(\sqrt{\log(1/\err(h|k))} +
        \sqrt{\frac{\pi}{2}} \right)\\
      &\approx \sigma_k\left(\Phi^{-1}(\acc(h|k)) +
      \sqrt{\frac{\pi}{2}}\right).
      \end{split}
    \label{eq:radius_bound}
  \end{eqnarray}
 \label{thm:enflt}
\end{theorem}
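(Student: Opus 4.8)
The plan is to apply Marton's blowup Lemma (Lemma~\ref{thm:blowup}) to the class-conditional law $\mu := P_{X|k}$ and to the ``bad set'' $B := B(h,k) = \{x \mid h(x) \ne k\}$, after first translating the adversarial quantities into statements about the measure of an $\epsilon$-blowup. The hypothesis $\acc(h|k) < 1$ means $\mu(B) = \err(h|k) > 0$, so the Lemma is applicable, and since $P_{X|k}$ has $\text{T}_2(\sigma_k^2)$ it also has $\text{BLOWUP}(\sigma_k^2)$. The single identity that drives everything is the equivalence $r_h(x,k) \le \epsilon \iff x \in B^\epsilon$ recorded in Section~\ref{sec:term}: the event whose probability defines $\acc_\epsilon(h|k)$ is complementary to $\{X \in B^\epsilon\}$, so that
\[
  \acc_\epsilon(h|k) = 1 - \mu(B^\epsilon) = P_{X|k}(r_h(X,k) > \epsilon).
\]
Marton's Lemma then states that for all $\epsilon \ge \sqrt{2\sigma_k^2 \log(1/\mu(B))} = \epsilon(h|k)$ one has $\mu(B^\epsilon) \ge 1 - e^{-\frac{1}{2\sigma_k^2}(\epsilon - \epsilon(h|k))^2}$.

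Part (A) is then immediate. Substituting the blowup estimate into the identity above gives $\acc_\epsilon(h|k) \le e^{-\frac{1}{2\sigma_k^2}(\epsilon - \epsilon(h|k))^2}$ whenever $\epsilon \ge \epsilon(h|k)$. The competing bound $\acc_\epsilon(h|k) \le \acc(h|k)$ comes for free from monotonicity of the blowup in $\epsilon$: since $B^\epsilon \supseteq B^0 = B$ we have $\mu(B^\epsilon) \ge \mu(B) = \err(h|k)$, hence $\acc_\epsilon(h|k) \le 1 - \err(h|k) = \acc(h|k)$. Taking the minimum of the two yields~\eqref{eq:bingo}.

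For Part (B), I would write the average distance as a tail integral. Since $d(h|k) = \mathbb E_{X|k}[r_h(X,k)]$ with $r_h(\cdot,k) \ge 0$ measurable, the layer-cake formula together with $P_{X|k}(r_h(X,k) > \epsilon) = \acc_\epsilon(h|k)$ gives
\[
  d(h|k) = \int_0^\infty \acc_\epsilon(h|k)\, d\epsilon.
\]
I would split this integral at the critical radius $\epsilon(h|k)$, which is precisely the threshold below which the exponential estimate is no longer guaranteed. On $[0, \epsilon(h|k)]$ I bound the integrand crudely by $\acc(h|k) \le 1$, contributing at most $\epsilon(h|k)$; on $[\epsilon(h|k), \infty)$ I insert the Part~(A) bound and evaluate the Gaussian integral $\int_{\epsilon(h|k)}^\infty e^{-\frac{1}{2\sigma_k^2}(\epsilon - \epsilon(h|k))^2}\,d\epsilon = \sigma_k\sqrt{\pi/2}$ by the substitution $u = \epsilon - \epsilon(h|k)$. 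Summing the two contributions produces $d(h|k) \le \epsilon(h|k) + \sigma_k\sqrt{\pi/2}$, which is the content of~\eqref{eq:radius_bound}.

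The one genuinely nontrivial ingredient is Lemma~\ref{thm:blowup}, which I am free to assume; once the translation $\acc_\epsilon(h|k) = 1 - \mu(B^\epsilon)$ is made, both parts are short computations. The points that require care are bookkeeping rather than conceptual: one must note that $B(h,k)$ is Borel (from measurability of $h$) so that $x \mapsto r_h(x,k) = d(x, B(h,k))$ is measurable and the tail formula is legitimate, and one must respect the fact that the exponential blowup estimate is valid only for $\epsilon \ge \epsilon(h|k)$ --- which is exactly why the integral in Part~(B) is split at that point rather than bounded by the exponential throughout.
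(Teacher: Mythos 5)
Your proof is correct and follows essentially the same route as the paper's: Marton's blowup lemma applied to $B(h,k)$ under $\mu = P_{X|k}$, the identification $\acc_\epsilon(h|k) = 1-\mu(B(h,k)^\epsilon)$, and for part \textit{(B)} the tail-integral split at the critical radius $\epsilon(h|k)$ with the Gaussian tail contributing $\sigma_k\sqrt{\pi/2}$. One remark: what you (and the paper's own proof) actually establish is $d(h|k)\le \epsilon(h|k)+\sigma_k\sqrt{\pi/2}=\sigma_k\left(\sqrt{2\log(1/\err(h|k))}+\sqrt{\pi/2}\right)$, so the $\sqrt{\log(1/\err(h|k))}$ appearing in the displayed statement of \textit{(B)} is missing a factor of $\sqrt{2}$ --- a typo in the theorem as stated, not a gap in your argument.
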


\begin{proof}
  The main idea is to invoke Lemma \ref{thm:blowup}, and then apply the bound
  \eqref{eq:blowup} with $B = B(h,k) := \{x \in \mathcal X | h(x) \ne k\}$, $\mu
  = P_{X|k}$, and $c=\sigma_k^2$. See Appendix \ref{sec:proofs} for details.
\end{proof}

In the particular case of attacks happening in euclidean space (this is the
default setting in the literature), the above theorem has the following
corollary.
\begin{corollary}[Strong ``No Free Lunch'' Theorem on flat space]
  Let $ 1 \le q \le \infty$, and define
  \begin{eqnarray}
    \epsilon_q(h|k) := \epsilon(h|k)p^{\frac{1}{q}-\frac{1}{2}} \approx
    p^{\frac{1}{q}-\frac{1}{2}}\sigma_k\Phi^{-1}(\acc(h|k)).
  \end{eqnarray}
  If in addition to the assumptions of Theorem
  \ref{thm:enflt} the conditional data manifold $\mathcal X_k$ is flat, i.e
  $\operatorname{Ric}_{\mathcal X_k} = 0$, then for the $\ell_q$ threat model,
  we have
  
\textbf{\textit{(A1)} Adversarial robustness accuracy:} If $\epsilon \ge
\epsilon_q(h|k)$, then
  \begin{eqnarray}
    \acc_{\epsilon}(h|k) \le
    \min(\acc(h|k),e^{-\frac{p^{1-2/q}}{2\sigma_k^2}(\epsilon -
      \epsilon_q(h|k))^2}).
  \end{eqnarray}

\textbf{\textit{(A2)} Average distance to error set:} if $\epsilon \ge
\epsilon_q(h|k)$, then
  \begin{eqnarray}
    \begin{split}
      d(h|k) &\le \frac{\sigma_k}{p^{1/2-1/q}} \left(\sqrt{\log(1 /
          \err(h|k))} + \sqrt{\pi/2} \right)\\
      &=\frac{\sigma_k}{p^{1/2-1/q}} \left(\Phi^{-1}(\acc(h|k)) + \sqrt{\pi/2}
      \right) .
    \label{eq:radius_bound_q}
  \end{split}
  \end{eqnarray}

In particular, for the $\ell_\infty$ threat model,
we have

\textbf{\textit{(B1)} Adversarial robustness accuracy:} if $\epsilon \ge
\frac{\epsilon(h|k)}{\sqrt{p}}$, then
  \begin{eqnarray}
    \acc_{\epsilon}(h|k) \le \min(\acc(h|k),e^{-\frac{p}{2\sigma_k^2} (\epsilon
    - \epsilon(h|k)/\sqrt{p})^2}).
  \end{eqnarray}

\textbf{\textit{(B2)} Bound on average distance to error set:}
  \begin{eqnarray}
    \begin{split}
      d(h|k) &\le \frac{\sigma_k}{\sqrt{p}}\left(\sqrt{\log(1/\err(h|k))} +
        \sqrt{\pi/2} \right)\\
      &=\frac{\sigma_k}{\sqrt{p}} \left(\Phi^{-1}(\acc(h|k)) + \sqrt{\pi/2}
      \right) .
    \end{split}
    \label{eq:radius_bound_infty}
  \end{eqnarray}      
\label{thm:enflc}
\end{corollary}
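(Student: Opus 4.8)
The plan is to deduce the corollary from Theorem~\ref{thm:enflt} by a change-of-metric argument. On a flat conditional manifold ($\operatorname{Ric}_{\mathcal X_k}=0$) we have $\mathcal X_k\cong\mathbb R^p$ with geodesic distance equal to the Euclidean $\ell_2$ distance, so the geodesic threat model of the theorem is already the $\ell_2$ threat model; all the work is in passing from $\ell_2$ to a general $\ell_q$. Concretely, I would first re-express the conclusion as a statement about the $\ell_q$-blowup of the bad set $B(h,k)$ under $\mu=P_{X|k}$, using $\acc_\epsilon(h|k)=1-\mu(B(h,k)^\epsilon_{\ell_q})$ and $\mu(B(h,k))=\err(h|k)$.

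The key step is to establish an $\ell_q$ analogue of the hypothesis: that $P_{X|k}$ satisfies $\text{T}_2(c_q)$ in the $\ell_q$ metric with $c_q=\sigma_k^2 p^{2/q-1}$. Since $W_2$ is monotone in the ground metric, this reduces to the sharp norm comparison $\|a\|_q\le p^{1/q-1/2}\|a\|_2$ on $\mathbb R^p$: for any coupling $\pi$ of $\nu$ and $\mu$ one then has $\mathbb E_\pi[\|X'-X\|_q^2]\le p^{2/q-1}\,\mathbb E_\pi[\|X'-X\|_2^2]$, and taking the infimum over couplings and invoking $\text{T}_2(\sigma_k^2)$ in $\ell_2$ gives $W_2^{\ell_q}(\nu,\mu)\le p^{1/q-1/2}\sqrt{2\sigma_k^2\kl(\nu\|\mu)}=\sqrt{2c_q\,\kl(\nu\|\mu)}$, as required.

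With the $\ell_q$-$\text{T}_2$ property in hand, I would feed it into Marton's blowup lemma (Lemma~\ref{thm:blowup}) in the metric space $(\mathbb R^p,\ell_q)$, applied to $B=B(h,k)$ and $\mu=P_{X|k}$, so that the blowup threshold is exactly $\sqrt{2c_q\log(1/\err(h|k))}=\epsilon_q(h|k)$. For $\epsilon\ge\epsilon_q(h|k)$ this yields $\mu(B(h,k)^\epsilon_{\ell_q})\ge 1-e^{-\frac{1}{2c_q}(\epsilon-\epsilon_q(h|k))^2}$; since the blowup is nondecreasing in $\epsilon$ (so $\acc_\epsilon(h|k)\le\acc_0(h|k)=\acc(h|k)$), taking the minimum of the two upper bounds gives (A1), and $q=\infty$ gives (B1). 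For the distance bounds (A2)/(B2) I would use the layer-cake identity $d(h|k)=\int_0^\infty(1-\mu(B(h,k)^\epsilon_{\ell_q}))\,d\epsilon$, bound the integrand by $1$ on $[0,\epsilon_q(h|k)]$ and by the Gaussian tail on $[\epsilon_q(h|k),\infty)$, and use $\int_0^\infty e^{-t^2/(2c_q)}dt=\sqrt{\pi c_q/2}$ to get $d(h|k)\le\epsilon_q(h|k)+\sqrt{\pi c_q/2}$; substituting $c_q$ and $\epsilon_q(h|k)$ recovers the prefactor $\sigma_k p^{1/q-1/2}=\sigma_k/p^{1/2-1/q}$ (possibly after a sharper choice of split point to replace $\sqrt{2\log}$ by $\sqrt{\log}$).

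I expect the change-of-metric step to be the crux. The inequality $\|a\|_q\le p^{1/q-1/2}\|a\|_2$ goes the right way only for $q\le 2$ (the $\ell_1$ / few-pixel regime), where it is tight and delivers $c_q=\sigma_k^2 p^{2/q-1}$ cleanly. For $q\ge 2$, and most acutely for $\ell_\infty$, one instead has only $\|a\|_q\le\|a\|_2$ (with equality on one-sparse vectors), which yields the weaker constant $c_q=\sigma_k^2$ rather than the claimed $\sigma_k^2 p^{2/q-1}$. Recovering the sharper $p$-dependent constant in that regime cannot come from norm comparison alone and would have to exploit additional structure (flatness together with a product or coordinate-rescaling argument); this is the step I would scrutinize most carefully in the author's proof.
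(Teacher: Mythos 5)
Your route is essentially the paper's: the paper also deduces the flat case from the geodesic ($=\ell_2$) statement of Theorem~\ref{thm:enflt} via equivalence of $\ell_q$ norms, the only cosmetic difference being that you transfer the $\text{T}_2$ inequality to the $\ell_q$ metric before invoking Marton's lemma, whereas the paper transfers the blowup sets afterwards, asserting the inclusion $B(h,k)_{\ell_2}^{\epsilon p^{1/2-1/q}}\subseteq B(h,k)_{\ell_q}^{\epsilon}$. The two transfers hinge on the very same norm comparison, and your layer-cake treatment of (A2)/(B2) is also the paper's (up to a factor $\sqrt{2}$ that the paper silently drops from $\epsilon(h|k)$ when passing to \eqref{eq:radius_bound} and \eqref{eq:radius_bound_q}).

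The concern you flag in your last paragraph is exactly right, and it is not resolved in the paper's proof. The inclusion $B_{\ell_2}^{\epsilon p^{1/2-1/q}}\subseteq B_{\ell_q}^{\epsilon}$ (equivalently, your $\text{T}_2(\sigma_k^2p^{2/q-1})$ in the $\ell_q$ metric) requires $\|a\|_q\le p^{1/q-1/2}\|a\|_2$, which holds only for $q\le 2$. The paper instead invokes $\|a\|_2\le p^{1/2-1/q}\|a\|_q$, which holds only for $q\ge 2$ and in any case yields the \emph{reverse} inclusion $B_{\ell_q}^{\epsilon}\subseteq B_{\ell_2}^{\epsilon p^{1/2-1/q}}$; that upper- rather than lower-bounds $P_{X|k}(B(h,k)_{\ell_q}^{\epsilon})$ and so proves nothing. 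Moreover, for $q>2$ the claimed $p$-dependence is not merely unproved but unobtainable: take $P_{X|k}=\mathcal N(0,\sigma_k^2I_p)$ (which satisfies the hypotheses) and a classifier whose error set on class $k$ is the axis-aligned half-space $\{x^1\le t\}$; its $\ell_\infty$ $\epsilon$-blowup is $\{x^1\le t+\epsilon\}$, whose measure is independent of $p$ and bounded away from $1$, contradicting (B1) for large $p$. So your argument is sound for $1\le q\le 2$ (the $\ell_1$/few-pixel regime), and for $q>2$ it correctly delivers only the dimension-free constant $c_q=\sigma_k^2$; the sharper statements (A1)--(B2) for $q>2$, in particular the $\ell_\infty$ case, do not follow from the argument given either by you or by the paper.
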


\begin{proof}
  See Appendix \ref{sec:proofs}.
\end{proof}

\subsection{Making sense of the theorems}
Fig. \ref{fig:mnist} gives an instructive illustration of bounds in the above
theorems. For perfect classifiers, the test error $\err(h|k):=1-\acc(h|k)$ is
zero and so the factor $\sqrt{\log(1/\err(h|k))}$ appearing in definitions for
$\epsilon(h|k)$ and $\epsilon_q(h|k)$ is $\infty$; else this classifier-specific
factor grows only very slowly (the log function grows very slowly) as
$\acc(h|k)$ increases towards the perfect limit where $\acc(h|k)=1$. As
predicted by Corollary \ref{thm:enflc}, we observe in Fig. \ref{fig:mnist} that
beyond the critical value $\epsilon = \epsilon_\infty(h|k) :=
\sigma\sqrt{2\log(1/\err(h|k))/p}$, the
adversarial accuracy $\acc_\epsilon(h|k)$ decays at a Gaussian rate, and
eventually $\acc_\epsilon(h|k) \le \err(h|k)$ as soon as $\epsilon \ge
2\epsilon_\infty(h|k)$.


Comparing to the Gaussian special case (see section \ref{sec:apps} below), we
see that the curvature parameter $\sigma_k$ appearing in the theorems is an
analogue to the natural noise-level in the problem. The flat case $\mathcal X_k
= \mathbb R^p$ with an $\ell_\infty$ threat model is particularly instructive.
The critical values of $\epsilon$, namely $\epsilon_\infty(h|k)$ and
$2\epsilon_\infty(h|k)$ beyond which the compromising conclusions of the
Corollary \ref{thm:enflc} come into play is proportional to $\sigma_k/\sqrt{p}$.

Finally note that the $\ell_1$ threat model corresponding to $q=1$ in Corollary
\ref{thm:enflc}, is a convex proxy for the ``few-pixel'' threat model which was
investigated in ~\cite{onepixel}.

\subsection{Some consequences of the theorems}
\label{sec:apps}
 Recall that $P_{X|k}$ is the distribution of the inputs conditional on the
 class label being $k$ and $\mathcal X_k$ is the support of $P_{X|k}$. It turns
 out that the general ``No Free Lunch'' Theorem \ref{thm:enflt} and Corollary
 \ref{thm:enflc} apply to a broad range of problems, with certain geometric
 constraints on $P_{X|k}$ and $\mathcal X_k$. We discuss a non-exhaustive list
 of examples hereunder.

\subsubsection{Log-concave on a Riemannian manifold}
\label{sec:logcon}
Consider a conditional data model of the form $P_{X|k}\propto e^{-v_k(x)}dx$ 
on a $d$-dimensional Riemannian manifold $\mathcal X_k \subseteq \mathcal X$
satisfying the Bakry-Eme\'ry curvature condition~\cite{bakry85}
\begin{eqnarray}
  \operatorname{Hess}_x(v_k) +
  \operatorname{Ric}_x(\mathcal X) \succeq (1/\sigma^2_k)I_p,
  \label{eq:bakry}
\end{eqnarray}
for some $\sigma_k >
0$. Such a distribution is called \textit{log-concave}. By Corollary
1.1 of  ~\cite{otto2000} (and Corollary 3.2 of ~\cite{bobkov1999}), $P_{X|k}$ has the
$\text{T}_2(\sigma_k^2)$ property and therefore by Lemma \ref{thm:blowup}, the
BLOWUP($\sigma_k^2$) property, and Theorem \ref{thm:enflt} (and Corollary
\ref{thm:enflc} for flat space) applies.

The \textit{Holley-Stroock perturbation Theorem} ensures that if $P_{X|k} \propto
e^{-v_k(x) - u_k(x)}dx$ where $u_k$ is bounded, then Theorem \ref{thm:enflt}
(and Corollary \ref{thm:enflc} for flat space) holds with the noise parameter
$\sigma_k$ degraded to $\tilde{\sigma}_k := \sigma_k e^{\osc(u_k)}$, where
$\osc(u_k) := \sup_x u_k(x) - \inf_x u_k(x) \ge 0$.

\subsubsection{Elliptical Gaussian in euclidean space}
Consider the flat manifold $\mathcal X_k = \mathbb R^p$ and multi-variate
Gaussian distribution $P_{X|k} \propto e^{-v_k(x)}dx$ thereupon, where $v_k(x) =
\frac{1}{2}(x-m_k)^T\Sigma_k^{-1}(x-m_k)$, for some vector $m_k \in \mathbb R^p$
(called the mean) and positive-definite matrix $\Sigma_k$ (called the covariance
matrix) all of whose eigenvalues are $\le \sigma_k^2$. A direct computation
gives $\operatorname{Hess}(v_k) + \operatorname{Ric}_x \succeq 1/\sigma_k^2 + 0
= 1/\sigma_k^2$ for all $x \in \mathbb R^p$. So this is an instance of the above
log-concave example, and so the same bounds hold. Thus we get an elliptical
version (and therefore a strict generalization) of the basic ``No Free Lunch''
theorem in ~\cite{tsipras18}, with exactly the same constants in the bounds.
These results are also confirmed empirically in section \ref{sec:sim}.

\subsubsection{Volume measure on compact Riemannian manifold with positive Ricci
  \label{sec:uniform}
  curvature} Indeed the distribution $dP_{X|k} = dvol(\mathcal
X_k)/vol({\mathcal X_k})$ is log-concave on $\mathcal X_k$ since it satisfies
the Bakry-Eme\'ry curvature condition \eqref{eq:bakry} with $v_k = 0$ and
$\sigma_k = 1/\sqrt{R_k}$ where $R_k>0$ is the minimum of the Ricci curvature on
$\mathcal X_k$. Thus by section \ref{sec:logcon}, it follows that our
theorems hold. A prime example of such a manifold is a $p$-sphere of radius $r >
0$, thus with constant Ricci curvature $(p-1)/r^2$. For this example
~\cite{gilmerspheres18} is an instance (more on this in section
\ref{sec:related-work}).

\paragraph*{Application to the ``Adversarial Spheres'' problem.}
In the recent recent ``Adversarial Spheres'' paper ~\cite{gilmerspheres18}, wherein
the authors consider a 2-class problem on a so-called ``concentric spheres''
dataset. This problem  can be described in our notation as: $P_{X|+} =$ uniform
distribution on $p$-dimensional sphere of radius $r_+$ and $P_{X|-} =$ uniform
distribution on $p$-dimensional sphere of radius $r_-$. The classification
problem is to decide which of the two concentric spheres a sampled point came
from. The authors (see Theorem  5.1 of mentioned paper) show that for this
problem, for large $p$ and for any non-perfect classifier $h$, the average
$\ell_2$ distance between a point and the set of misclassified points is bounded
as follows
\begin{eqnarray}
  \ell_2(h) = \mathcal O(\Phi^{-1}(\acc(h|k))/\sqrt{p}).
  \label{eq:gilmer}
\end{eqnarray}

We now show how to obtain the above bound via a direct application of our
theorem \ref{thm:enflt}. Indeed, since these spaces are clearly compact
Riemannian manifolds with constant curvature $(p-1)/r^2_k$, each $P_{X|k}$ is and
instances of \ref{sec:uniform}, and so satisfies
$\text{T}_2\left(\frac{r_k^2}{p-1}\right)$.
Consequently, our Theorem \ref{thm:enflt} kicks-in and bound the average
distance of sample points with true label $k \in \{\pm\}$, to the error set (set
of misclassified samples):
\begin{itemize}
  \item $d_{\text{geo}}(h|k) \le
\frac{r_k}{\sqrt{p-1}}(\sqrt{2\log(1/\err(h|k))}) + \sqrt{\pi/2})$
for the geodesic threat model (induced by the ``great circle'' distance
between points), and
\item $\ell_2(h|k) \le \frac{r_k}{\sqrt{p-1}}(\sqrt{2\log(1/\err(h|k))} +
  \sqrt{\pi/2})$
  for the  $\ell_2$ threat  model. This follows from the previous inequality
  because the geodesic (aka great circle) distance between two points on a
  sphere is always larger than the euclidean $\ell_2$ distance between points.
 \end{itemize}

 To link more explicitly  with the bound \eqref{eq:gilmer} proposed in
 ~\cite{gilmerspheres18}, one notes the following elementary (and very crude)
 approximation of Gaussian quantile function
 $\Phi^{-1}(a) \approx \sqrt{2\log (1/(1-a))}$ for $a \in [0, 1)$. Thus,
 $\Phi^{-1}(1-\err(h|k))/\sqrt{p}$  and $\sqrt{2\log(1/\err(h|k))/(p-1)}$ are of the
 same order, for large $p$. Consequently, our bounds can be seen as a strict
 generalization of the bounds in ~\cite{gilmerspheres18}.

\subsubsection{Lipschitz pushforward of a $\text{T}_2$ distribution}
Lemma 2.1 of ~\cite{djellout2004} ensures that if $P_{X|k}$ is the
\textit{pushforward} via an $L_k$-Lipschitz map  $T_k: \mathcal Z_k \rightarrow
\mathcal X_k$ between metric spaces (an assumption which is implicitly made when
machine learning practitioners model images using generative neural
networks\footnote{The Lipschitz constant of a feed-forward neural network with
  1-Lipschitz activation function, e.g ReLU, sigmoid, etc., is bounded by the
  product of operator norms of the layer-to-layer parameter matrices.}, for
example), of a distribution $\mu_k$ which satisfies
$\text{T}_2(\tilde{\sigma}_k^2)$ on $\mathcal Z_k$ for some $\tilde{\sigma}_k >
0$, then $P_{X|k}$ satisfies $\text{T}_2(L_k^2\tilde{\sigma}_k^2)$ on $\mathcal
X_k$, and so Theorem \ref{thm:enflt} (and Corollary \ref{thm:enflc} for flat
space) holds with $\sigma_k=L_k\tilde{\sigma}_k$. This is precisely the GAN-type
data model assumed by ~\cite{fawzi18}, with $\mathcal Z_k:=\mathbb R^{p'}$  and
$\mu_k = \mathcal N(0,\sigma I_{p'})$ for all $k$.

\section{Related works}
\label{sec:related-work}
There is now a rich literature trying to understand adversarial robustness. Just
to name a few, let us mention ~\cite{tsipras18, schmidt2018, bubeck2018,
  gilmerspheres18, fawzi18, saeed2018, sinha2018}.
Below, we discuss a representative subset of these works, which is most relevant
to our own contributions presented in this manuscript. These all use some kind
of Gaussian isoperimetric inequality~\cite{boucheron2013}, and turn out to be
\textbf{very special cases} of the general bounds presented in Theorem \ref{thm:enflt}
and Corollary \ref{thm:enflc}. See section \ref{sec:apps} for a detailed
discussion on generality our results.

\paragraph*{Gaussian and Bernoulli models.}
We have already mentioned the work ~\cite{tsipras18}, which first showed
that motivating problem presented in section \ref{sec:toy}, every classifier can
be fooled with high probability. In a followup paper~\cite{schmidt2018}, the
authors have also suggested that the sample complexity for robust generalization
is much higher than for standard generalization. These observations are also
strengthened by independent works of ~\cite{bubeck2018}.

\paragraph*{Adversarial spheres.}
The work which is most similar in flavor to ours is the recent ``Adversarial
Spheres'' paper ~\cite{gilmerspheres18}, wherein the authors consider a 2-class
problem on classifying two concentric spheres in $\mathbb R^p$ of different
radii. The authors showed that the distance of each point to the set of
misclassified images is of order $\mathcal O(1/\sqrt{p})$. We discussed this
work in detail in section \ref{sec:apps} and showed that it follows directly
from our Theorem \ref{thm:enflt}.

\paragraph*{Hyper-cubes.}
Results for the the hyper-cube image model considered in ~\cite{goldstein} can
be recovered by noting that the uniform measure on $[0, 1]^p$ is the pushforward
$T_\#\gamma_p$ of the standard Gaussian on $\mathbb R^p$, where $T:\mathbb R^p
\rightarrow [0, 1]^p$ is $(2\pi)^{-1/2}$-Lipschitz map defined by
$T(z_1,\ldots,z_p):=(\Phi(z_1),\ldots,\Phi(z_p))$.

\paragraph*{Generative models.}
In ~\cite{fawzi18}, the authors considered a scenario where data-generating
process is via passing a multivariate Gaussian distribution through a Lipschitz
continuous mapping
$g:
\mathbb R^{m} \rightarrow \mathcal X$, called the generator. The authors then
studied the per-sample robustness radius defined by $r_{\mathcal X}(x,k) := \inf
\{\|x'-x\|_2 \text{ s.t } x' \in \mathcal X,\; h(x') \ne k\}$. In the notation
of our manuscript, this can be rewritten as $r_{\mathcal X}(x,k) := d_{\mathcal
  X}(x,B(h, k))$, from which it is clear that $r_{\mathcal X}(x,k) \le \epsilon$
iff $x \in B(h,k)^\epsilon$. Using the basic Gaussian isoperimetric
inequality~\cite{boucheron2013}, the authors then proceed to obtain bounds on
the probability that the classifier changes its output on an
$\epsilon$-perturbation of some point on manifold the data manifold, namely
$\acc^{\text{switch}}_\epsilon(h):=1-\sum_{k}\pi_k\err^{\text{switch}}_\epsilon(h|k)
$, where
$\err^{\text{switch}}_\epsilon(h|k):=P_{X|k}(C_{k\rightarrow}(\epsilon)) =
\acc(h|k)\err_\epsilon(h|k)$ and $C_{k\rightarrow}(\epsilon) := B(h,k)^\epsilon
- B(h,k)$ is the annulus in Fig.  \ref{fig:blowup}. Our bounds in Theorem
\ref{thm:enflt} and Corollary \ref{thm:enflc} can then be seen as generalizing
the methods and bounds in ~\cite{fawzi18} to more general data distributions
satisfying $W_2$ transportation-cost inequalities $\text{T}_2(c)$, with $c > 0$.

\paragraph*{Distributional robustness and regularization.} On a completely
different footing, ~\cite{blanchet2016,esfahani2017,sinha2018}
have linked distributional robustness to robust estimation theory from
classical statistics and regularization. An interesting bi-product of these
developments is that penalized regression problems like the square-root Lasso
and sparse logistic regression have been recovered as distributional robust
counterparts of the unregularized problems.

\section{Experimental evaluation}
\label{sec:experiments}
We now present some empirical validation for our theoretical results.
\subsection{Simulated data}
\label{sec:sim}
The simulated data are discussed in section \ref{sec:toy}: $Y \sim \Bern(\{\pm
1\})$, $X|Y \sim \mathcal N (Y\eta, 1)^{\times p}$, with $p=1000$ where $\eta$
is an SNR parameter which controls the difficulty of the problem. Here, The classifier $h$ is a
multi-layer perceptron with architecture $1000 \rightarrow 200 \rightarrow 100
\rightarrow 2$ and ReLU activations.  The results
are are shown in Fig. \ref{fig:mnist} (Left). As predicted by our theorems, we observe
that beyond the critical value $\epsilon = \epsilon_\infty(h) :=
\sigma\sqrt{2\log(1/\err(h))/p} = \tilde{\mathcal O}(\sigma/\sqrt{p})$, where
$\err(h):= 1 - \acc(h)$, the adversarial accuracy $\acc_\epsilon(h)$ decays
exponential fast, and passes below the horizontal line $\err(h)$ as soon as
$\epsilon \ge 2 \epsilon_\infty(h)$.
\begin{figure}[!htpb]
  \includegraphics[width=.235\textwidth]{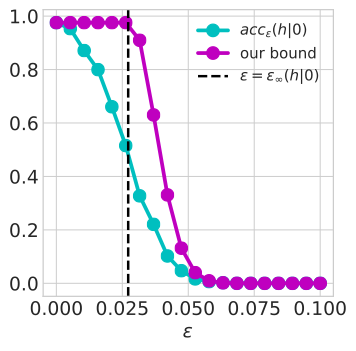}
  \includegraphics[width=.235\textwidth]{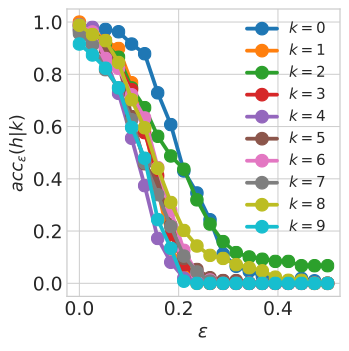}
  \caption{Illustrating The Extended ``No Free Lunch'' Theorem
    \ref{thm:enflc} for the $\ell_\infty$ threat model on the
    classification problems. \textbf{Left:} Simulated data ~\cite{tsipras18} (discused in section
    \ref{sec:toy}) with with $p=10000$ and SNR parameter $\eta=1$.
    The classifier $h$ is a multi-layer perceptron with architecture $10000 \rightarrow 200
    \rightarrow 100 \rightarrow 2$.
    As $\epsilon$ is increased, the robust accuracy (only shown here for the
    class $k=0$, but results for the class $k=1$ are similar)
    degrades slowly and then eventually hits a phase-transition
    point $\epsilon=\epsilon_\infty(h|0)$; it then decays exponentially fast,
    and the performance is eventually reduced to chance level. This is as
    predicted by our Theorems \ref{thm:enflt} and \ref{thm:enflc}.
    \textbf{Right:} MNIST dataset. The classifier $h$ is a deep feed-forward CNN ($Conv2d \rightarrow
    Conv2d \rightarrow 320 \rightarrow 10$) is trained
    using PyTorch \url{https://pytorch.org/} to predict MNIST classification
    problem. The pattern of decay of the adversarial accuracy similar to that on
    the simulated, indicating that this real dataset might also suffer from
    concentration, allowing our theorems to apply.
  }
  \label{fig:mnist}
\end{figure}

\subsection{Real data}
Wondering whether the phase transition and bounds predicted by Theorem
\ref{thm:enflt} and Corollary \ref{thm:enflt} holds for real data, we trained a
deep feed-forward CNN (architecture: $Conv2d \rightarrow
    Conv2d \rightarrow 320 \rightarrow 10$) for classification on the MNIST
    dataset~\cite{mnist}, a standard benchmark problem in supervised
    machine-learning. The results are shown in Fig. \ref{fig:mnist}.
This model attains a classification accuracy of 98\% on held-out data. We
consider the performance of the model on adversarialy modified images according
to the $\ell_\infty$ threat model, at a given tolerance level (maximum allowed
modification per pixel) $\epsilon$. As $\epsilon$ is increased, the performance
degrades slowly and then eventually hits a phase-transition point; it then decays
exponentially fast and the performance is eventually reduced to chance level.
This behavior is in accordance with Corollary \ref{thm:enflc}, and suggests that
the range of applicability of our results may be much larger than what we have
been able to theoretically establish in Theorem \ref{thm:enflt} and Corollary
\ref{thm:enflc}.

Of course, a more extensive experimental study would be required to strengthen
this empirical observation.

\section{Concluding remarks}
\label{sec:conclusion}
We have shown that on a very broad class of data distributions, any classifier
with even a bit of accuracy is vulnerable to adversarial attacks. Our work uses
powerful tools from geometric probability theory to generalize all the main
impossibility results that have appeared in adversarial robustness literature.
Moreover, our results would encourage one to conjecture that the modulus of
concentration of probability distribution (e.g in $\text{T}_2$ inequalities) on
a manifold completely characterizes the adversarial or distributional robust
accuracy in classification problems.

\paragraph*{Redefine the rules of the game ?}
A limitation for adversarial robustness, as universal our strong No Free
Lunch Theorem we have developed in this paper could indicate that the attack
models currently being considered in the literature, namely additive
perturbations measured in the $\ell_0$, $\ell_1$, $\ell_2$, $\ell_\infty$, etc.
norms, and in which the attacker can make as many queries as they which, may be
too lax. Just like in coding theory where a rethinking of the constraints on a
channel leads to the Shannon limit to be improved, one could hope that a careful
rethink of the constraints put on the adversarial attacker might alleviate the
pessimistic effect of our impossibility results. As remarked in
\cite{gilmergame18}, it is not even clear if the current existing attack models
are most plausible.

\paragraph*{Future directions.}
One could consider the following open questions, as natural continuation of our
work:
\begin{itemize}
\item Extend Theorem \ref{thm:enflt} and Corollary \ref{thm:enflc} to more
  general data distributions.
 \item Study more complex threat models, e.g small deformations.  
\item Fine grained analysis of sample complexity and complexity of hypotheses
  class, with respect to adversarial and distributional robustness. This
  question has been partially studied in ~\cite{schmidt2018,bubeck2018} in the
  adversarial case, and  ~\cite{sinha2018} in the distributional robust scenario.
  \item Study more general threat models. \cite{gilmergame18} has argued that
    most of the proof-of-concept problems studied in theory papers might not be
    completely aligned with real security concerns faced by machine learning
    applications. It would be interesting to see how the theoretical bounds
    presented in our manuscript translate on real-world datasets, beyond the
    MNIST on which we showed some preliminary experimental results.
\item Develop more geometric insights linking adversarial robustness and curvature
  of decision boundaries. This view was first introduced in ~\cite{fawzi18b}.
\end{itemize}

\paragraph{Acknowledgments.} I would wish to thank Noureddine El Karoui for
  stimulating discussions;  Alberto Bietti and Albert Thomas for their useful
  comments and remarks.


\nocite{langley00}

\bibliography{bib.bib}
\bibliographystyle{icml2019}

\cleardoublepage
\twocolumn[
\icmltitle{(Supplementary material) Limitations of Adversarial Robustness: Strong No Free Lunch Theorem}
\icmlsetsymbol{equal}{*}

\begin{icmlauthorlist}
  \icmlauthor{Elvis Dohmatob}{aff}
\end{icmlauthorlist}

\icmlaffiliation{aff}{Criteo, Paris, France.}


\icmlkeywords{boring formatting information, machine learning, ICML}

\vskip 0.3in
]

\printAffiliationsAndNotice 

\appendix


\section{Proofs}
\label{sec:proofs}

\begin{proof}[Proof of claims on the toy problem from section \ref{sec:toy}]
  These claims were already proved in ~\cite{tsipras18}. We provide a proof here
  just for completeness.

  Now, one computes
\begin{eqnarray*}
\begin{split}
&\acc(h_{\text{avg}}) := \mathbb P_{(X,Y)}\left(h_{\text{avg}}(X)
  = Y\right) = \mathbb P\left(Y{w}^TX \ge 0\right)\\
&= \mathbb P_Y\left((Y/(p-1))\sum_{j\ge2}\mathcal N(\eta Y,1) \ge 0\right)\\
&=\mathbb P\left(\mathcal N(\eta,1/(p-1)) \ge 0\right) =
\mathbb P\left(\mathcal N(0,1/(p-1)) \ge -\eta\right)\\
&=\mathbb P\left(\mathcal N(0,1/(p-1)) \le \eta \right) \ge 1 -
e^{-{(p-1)\eta^2}/{2}},
& 
\end{split}
\end{eqnarray*}
which is $\ge 1 - \delta$ if $\eta \ge \sqrt{2\log(1/\delta)/(p-1)}$.
Likewise, for $\epsilon \ge \eta$, it was shown in ~\cite{tsipras18} that the
adversarial robustness accuracy of $h_{\text{avg}}$ writes
\begin{eqnarray*}
\begin{split}
&\acc_\epsilon(h_{\text{avg}}) := \mathbb
P_{(X,Y)}\left(Yh_{\text{avg}}(X + {\Delta x}) \ge 0\; \forall \|{\Delta
    x}\|_\infty \le \epsilon\right)\\
&=\mathbb P_{(X,Y)}\left( \inf_{\|{\Delta x}\|_\infty \le \epsilon}Yw^T({X
    +\Delta x}) \ge 0\right)\\
&= \mathbb P_{(X,Y)}\left( Y{w}^TX - \epsilon \|Y{w}\|_1 \ge 0\right)\\
&= \mathbb P_{(X,Y)}\left(Y{w}^TX - \epsilon \ge 0\right)\\
&= \mathbb P(\mathcal N(0,1/(p-1))
\ge \epsilon- \eta)
\le e^{-{(p-1)(\epsilon- \eta)^2}/{2}}.
\end{split}
\end{eqnarray*}
Thus $\acc_\epsilon(h_{\text{avg}}) \le \delta$ for $\epsilon \ge \eta +
\sqrt{2\log(1/\delta)/(p-1)}$,
which completes the proof.
\end{proof}

\begin{proof}[Proof of Theorem \ref{thm:enflt}]
  Let $h: \mathcal X \rightarrow \{1,\ldots,K\}$ be a classifier, and for a
  fixed class label $k \in \{1,2,\ldots,K\}$, define the set $B(h,k) := \{x \in
  \mathcal X|h(x) \ne k\}$. Because we only consider $P_{X|Y}$-a.e continuous
  classifiers, each $B(h,k)$ is Borel. Conditioned on the event ``$y=k$'', the
  probability of $B(h,k)$ is precisely the average error made by the classifier
  $h$ on the class label $k$. That is, $\acc(h|k) = 1-P_{X|k}(B(h,k))$.
  Now, the assumptions imply by virtue of Lemma \ref{thm:blowup}, that $P_{X|k}$
  has the BLOWUP($c$) property. Thus, if $\epsilon \ge
  \sigma_k\sqrt{2\log(1/(P_{X|Y}(B(h,k))} = \sigma_k\sqrt{2\log(1/\err(h|k)} =:
  \epsilon(h|k)$, then one has
  \begin{eqnarray*}
    \begin{split}
      &\acc_\epsilon(h|k) =
      1-P_{X|k}(B(h,k)_{d_{\text{geo}}}^\epsilon)\\
      &\le e^{-\frac{1}{2\sigma_k^2}(\epsilon-\sigma_k\sqrt{2\log(1/(P_{X|k}(B(h,k))})^2}\\
      &= e^{-\frac{1}{2\sigma_k^2}(\epsilon-\sigma_k\sqrt{2\log(1/\err(h|k)})^2}
      = e^{-\frac{1}{2\sigma_k^2}(\epsilon-\epsilon(h|k))^2}\\
      &\le e^{-\frac{1}{2\sigma_k^2}\epsilon(h|k)^2} =\err(h|k),\text{ if
      }\epsilon \ge 2\epsilon(h|k).
    \end{split}
  \end{eqnarray*}
  On the other hand, it is clear that $\acc_\epsilon(h|k) \le \acc(h|k)$ for any
  $\epsilon \ge 0$ since $B(h,k) \subseteq B(h,k)^\epsilon$ for any threat
  model. This concludes the proof of part \textit{(A)}. For part \textit{(B)},
  define the random variable $Z:=d(X,B(h,k))$ and note that
  \begin{eqnarray*}
    \begin{split}
      &d(h|k):=\mathbb E_{X|k}[d(X, B(h,k))] = \int_0^\infty P_{X|k}(Z \ge
      \epsilon)d\epsilon\\
      &=\int_0^{\epsilon(h|k)} P_{X|k}(Z \ge \epsilon) d\epsilon +
      \int_{\epsilon(h|k)}^\infty P_{X|k}(Z \ge \epsilon)d\epsilon\\
      &\le \epsilon(h|k) + \int_{\epsilon(h|k)}^\infty P_{X|k}(Z \ge
      \epsilon)d\epsilon,\;\text{ as }P_{X|k}(Z \ge \epsilon) \le 1\\
      &\le \epsilon(h|k) + \int_{\epsilon(h|k)}^\infty
      e^{-\frac{1}{2\sigma_k^2}(\epsilon -
        \epsilon(h|k))^2}d\epsilon,\;\text{ by inequality \eqref{eq:bingo}}\\
      &= \epsilon(h|k) + \frac{\sigma_k\sqrt{2\pi}}{2}\left(\int_{-\infty}^\infty
      \frac{1}{\sigma_k\sqrt{2\pi}}e^{-\frac{1}{2\sigma_k^2}\epsilon^2}d\epsilon\right)
\\
&=\epsilon(h|k) + \frac{\sigma_k\sqrt{2\pi}}{2} = \sigma_k
\left(\sqrt{\log(1/\err(h|k))}+\sqrt{\frac{\pi}{2}}\right),
    \end{split}
  \end{eqnarray*}
  which is the desired inequality.
\end{proof}
 
\begin{proof}[Proof of Corollary \ref{thm:enflc}]
  For flat geometry $\mathcal X_k = \mathbb R^p$; part \textit{(A1)} of
  Corollary \ref{thm:enflc} then follows from Theorem \ref{thm:enflt} and the
  equivalence of $\ell_q$ norms, in particular
  \begin{eqnarray}
    \|x\|_2 \le  p^{1/2-1/q}\|x\|_q,
    \label{eq:equivalence_lq}
  \end{eqnarray}
  for all $x \in \mathbb R^p$ and for all $q \in [1, \infty]$. Thus we have the
  blowup inclusion $B(h,k)_{\ell_2}^{\epsilon p^{1/2-1/q}} \subseteq
  B(h,k)_{\ell_q}^\epsilon$. Part \textit{(B1)} is just the result restated for
  $q=\infty$. The proofs of parts \textit{(A2)} and \textit{(B2)} trivially
  follow from the inequality \eqref{eq:equivalence_lq}.
\end{proof}

\begin{remark}
  Note that the particular structure of the error set $B(h,k)$ did not play any
  part in the proof of Theorem \ref{thm:enflt} or of Corollary \ref{thm:enflc},
  beyond the requirement that the set be Borel. This means that we can obtain
  and prove analogous bounds for much broader class of losses. For example, it
  is trivial to extend the theorem to targeted attacks, wherein the attacker can
  aim to change an images label from $k$ to a particular $k'$.
\end{remark}

\begin{proof}[Proof of Lemma \ref{thm:blowup}]
  Let $B$ be a Borel subset of $\mathcal X=(\mathcal X,d)$ with $\mu(B) > 0$,
  and let $\mu|_B$ be the restriction of $\mu$ onto $B$ defined by $\mu|_B(A) :=
  \mu(A \cap B)/\mu(B)$ for every Borel $A \subseteq \mathcal X$. Note that
  $\mu|_B \ll \mu$ with \textit{Radon-Nikodym derivative} $\frac{d \mu|_B}{d\mu}
  = \frac{1}{\mu(B)}1_B$. A direct computation then reveals that
  \begin{eqnarray*}
    \begin{split}
      \kl(\mu|_B\|\mu) &= \int \log\left(\frac{d \mu|_B}{d \mu}\right)d\mu|_B\\
      &= \int_B \log\left(\frac{1}{\mu(B)}\right)d\mu|_B\\
      &= \log(1/\mu(B))\mu|_B(B) = \log\left(\frac{1}{\mu(B)}\right).
    \end{split}
  \end{eqnarray*}
  On the other hand, if $X$ is a random variable with law $\mu|_B$ and $X'$ is a
  random variable with law $\mu|_{\mathcal X\setminus B^\epsilon}$, then the
  definition of $B^\epsilon$ ensures that $d(X,X') \ge \epsilon$ $\mu$-a.s, and
  so by definition \eqref{eq:wass}, one has $W_2(\mu|_B,\mu|_{\mathcal X\setminus
    B^\epsilon}) \ge \epsilon$. Putting things together yields
  \begin{eqnarray*}
    \begin{split}
      \epsilon &\le W_2(\mu|_B,\mu_{\mathcal X\setminus B^\epsilon})
      \le W_2(\mu|_B,\mu) +
      W_2(\mu|_{\mathcal X\setminus
        B^\epsilon},\mu)\\
      & \le \sqrt{2c\kl(\mu|_B\|\mu)} + \sqrt{2c\kl(\mu|_{\mathcal X\setminus
          B^\epsilon}\|\mu)}\\
      & \le \sqrt{2c\log(1/\mu(B))} + \sqrt{2c\log(1/\mu(\mathcal X\setminus
        B^\epsilon))}\\
      & = \sqrt{2c\log(1/\mu(B))} + \sqrt{2c\log(1/(1-\mu(B^\epsilon))},
    \end{split}
  \end{eqnarray*}
  where the first inequality is the triangle inequality for $W_2$ and the second
  is the $\text{T}_2(c)$ property assumed in the Lemma. Rearranging the above
  inequality gives
$$
\sqrt{2c\log(1/(1-\mu(B^\epsilon)))} \ge \epsilon - \sqrt{2c\log(1/\mu(B))},
$$
Thus, if $\epsilon \ge \sqrt{2c\log(1/\mu(B))}$, we can square both sides,
multiply by $c/2$ and apply the increasing function $t \mapsto e^t$, to get the
claimed inequality.
\end{proof}


\section{Distributional No ``Free Lunch'' Theorem}
\label{sec:dr}
As before, let $h:\mathcal X \rightarrow \mathcal Y$ be a classifier 
and $\epsilon \ge 0$ be a tolerance level. Let
$\widetilde{\acc}_\epsilon(h)$ denote the \textit{distributional robustness accuracy} of
$h$ at tolerance $\epsilon$, that is the worst possible
classification accuracy at test time, when the conditional distribution $P$ is
changed by at most $\epsilon$ in the Wasserstein-1 sense. More precisely,
  \begin{eqnarray}
    \widetilde{\acc}_\epsilon(h) := \inf_{Q \in \mathcal P(\mathcal X \times
    \mathcal Y),\;W_1(Q, P) \le \epsilon} Q(h(x) = y),
  \end{eqnarray}
  where the Wasserstein $1$-distance $W_1(Q,P)$ (see equation \eqref{eq:wass}
  for definition) in the constraint is with respect to the pseudo-metric
  $\tilde{d}$ on $\mathcal X \times \mathcal Y$ defined by
  $$
  \tilde{d}((x',y'), (x,y)) := \begin{cases}d(x',x),&\mbox{ if }y'=y,\\
    \infty,&\mbox{ else.}\end{cases}
  $$
The choice of $\tilde{d}$ ensures that we only consider alternative
distributions that conserve the marginals $\pi_y$; robustness is only
considered w.r.t to changes in the class-conditional distributions $P_{X|k}$.

Note that we can rewrite $\widetilde{\acc}_\epsilon(h) = 1 -
\widetilde{\err}_\epsilon(h)$,
\begin{eqnarray}
\widetilde{\err}_\epsilon(h) :=  \sup_{Q \in \mathcal P(\mathcal X \times
  \mathcal Y),\;W_1(Q, P) \le \epsilon} Q(X \in B(h,Y)),
\end{eqnarray}
where is the \text{distributional robustness test error} and  $B(h,y) := \{x \in
\mathcal X | h(x) \ne y\}$ as before. Of course, the goal of a machine learning
algorithm is to select a classifier (perhaps from a restricted family)
for which the average adversarial accuracy $\acc_{\epsilon}(h)$ is maximized.
This can be seen as a two player game: the machine learner chooses a strategy
$h$, to which an adversary replies by choosing a perturbed version  $Q \in
\mathcal P(\mathcal X \times \mathcal Y)$ of the data distribution, used to
measure the bad event ``$h(X) \ne Y$''.


It turns out that the lower bounds on adversarial accuracy obtained in Theorem
\ref{thm:enflt} apply to distributional robustness as well.
\begin{corollary}[No ``Free Lunch'' for distributional robustness]
  Theorem \ref{thm:enflt} holds for distributional robustness, i.e with
  $\acc_\epsilon(h|k)$ replaced with $\widetilde{\acc}_\epsilon(h|k)$.
  \label{thm:dr}
\end{corollary}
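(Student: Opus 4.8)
The plan is to deduce the distributional statement directly from the adversarial one already proved in Theorem~\ref{thm:enflt}, by establishing the single comparison inequality
$$\widetilde{\acc}_\epsilon(h|k) \le \acc_\epsilon(h|k) \quad \text{for every } \epsilon \ge 0.$$
Once this is in hand, the bound \eqref{eq:bingo} on $\acc_\epsilon(h|k)$ transfers verbatim to $\widetilde{\acc}_\epsilon(h|k)$, giving part \textit{(A)}; part \textit{(B)} needs no separate argument at all, since the radius $d(h|k)=\mathbb E_{X|k}[d(X,B(h,k))]$ is an intrinsic quantity of the pair $(P_{X|k},h)$ and does not reference any threat model. So the whole corollary reduces to showing that the distributional adversary is at least as powerful as the geodesic (or $\ell_q$) adversary.

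To prove the comparison inequality, recall from the proof of Theorem~\ref{thm:enflt} that $\acc_\epsilon(h|k)=1-P_{X|k}(B(h,k)^\epsilon)$, so it suffices to exhibit one feasible competitor $Q$ with $W_1(Q,P_{X|k})\le\epsilon$ and $Q(B(h,k))\ge P_{X|k}(B(h,k)^\epsilon)$. First I would build a Borel transport map $T:\mathcal X\to\mathcal X$ that pushes each $x\in B(h,k)^\epsilon$ onto a point $T(x)\in B(h,k)$ realizing the distance $d(x,B(h,k))\le\epsilon$, and fixes every $x\notin B(h,k)^\epsilon$. Setting $Q:=T_\# P_{X|k}$, the displacement cost is controlled by
\begin{equation*}
\begin{split}
W_1(Q,P_{X|k}) &\le \mathbb E_{X|k}[d(X,T(X))] \\
&= \mathbb E_{X|k}\big[d(X,B(h,k))\,\mathbf 1_{B(h,k)^\epsilon}(X)\big] \\
&\le \epsilon\, P_{X|k}(B(h,k)^\epsilon) \le \epsilon,
\end{split}
\end{equation*}
so $Q$ is feasible, and since $T$ carries all of $B(h,k)^\epsilon$ into $B(h,k)$ we get $Q(B(h,k))=P_{X|k}(T^{-1}(B(h,k)))\ge P_{X|k}(B(h,k)^\epsilon)$. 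Taking the supremum over feasible $Q$ yields $\widetilde{\err}_\epsilon(h|k)\ge P_{X|k}(B(h,k)^\epsilon)=\err_\epsilon(h|k)$, which is exactly the desired inequality.

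The main obstacle I anticipate is measure-theoretic rather than conceptual. The infimum defining $d(x,B(h,k))$ need not be attained inside the Borel (not necessarily closed) set $B(h,k)$, and a jointly measurable selection of the targets $T(x)$ must be justified; I would handle this by fixing a slack $\delta>0$, choosing $T_\delta(x)\in B(h,k)$ with $d(x,T_\delta(x))\le d(x,B(h,k))+\delta$ via a standard measurable-selection theorem so that $W_1(Q_\delta,P_{X|k})\le\epsilon+\delta$ while $Q_\delta(B(h,k))\ge P_{X|k}(B(h,k)^\epsilon)$, and then letting $\delta\downarrow 0$. Finally, one must check admissibility under the pseudo-metric $\tilde d$: because $\tilde d$ charges infinite cost across distinct labels, every finite-cost coupling is label-preserving, so the within-class construction above is precisely the class-$k$ block of a global perturbation of $P$ that leaves the other conditionals and the marginal $\pi$ fixed, confirming that $W_1(Q,P)\le\epsilon$ holds in the joint sense as well and that $Q$ is a legitimate distributional attack.
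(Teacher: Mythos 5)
Your proof is correct, but it takes a genuinely different route from the paper's. You establish the key comparison $\widetilde{\acc}_\epsilon(h|k) \le \acc_\epsilon(h|k)$ by a \emph{primal} construction: you exhibit an explicit feasible competitor $Q$ that transports the mass of the blowup $B(h,k)^\epsilon$ onto the error set $B(h,k)$ at per-point cost $d(x,B(h,k))\le\epsilon$, so that $\widetilde{\err}_\epsilon(h|k)\ge P_{X|k}(B(h,k)^\epsilon)=\err_\epsilon(h|k)$. The paper instead goes through Lagrangian \emph{strong duality} for the distributionally robust problem (via results of Blanchet et al.\ and Villani): it computes the convex conjugate $(1-\lambda d(x,B(h,y)))_+$ of the indicator with respect to the cost $\tilde d$, identifies the optimal multiplier $\lambda_*$, shows $\lambda_*^{-1}\ge\epsilon$, and concludes $\widetilde{\err}_\epsilon(h)=P(X\in B(h,Y)^{\lambda_*^{-1}})\ge P(X\in B(h,Y)^\epsilon)$. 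Your approach is more elementary and self-contained (no duality machinery or external lemmas), at the price of the measurable-selection issue you correctly flag; the paper's approach yields the exact value of the worst-case error as a blowup probability at the dual radius $\lambda_*^{-1}$, which is a slightly stronger structural statement. One small point to tighten in your argument: with the $\delta$-slack selection your competitor $Q_\delta$ satisfies $W_1(Q_\delta,P)\le\epsilon+\delta$, so it is feasible only at tolerance $\epsilon+\delta$; to land exactly at tolerance $\epsilon$ you should instead push $B(h,k)^{\epsilon-\delta}$ with slack $\delta$ and then let $\delta\downarrow 0$ (or note that the final bounds of Theorem \ref{thm:enflt} are continuous in $\epsilon$, so the limiting step is harmless). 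Your observation that part \textit{(B)} needs no separate argument is also correct, since $d(h|k)$ does not reference the threat model.
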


\begin{proof}
  See Appendix \ref{sec:proofs}.
\end{proof}

\begin{proof}[Proof of Corollary \ref{thm:dr}]
  We will use a dual representation of $\widetilde{\acc}_\epsilon(h|k)$ to
  establish that $\widetilde{\acc}_\epsilon(h|k) \le \acc_\epsilon(h|k)$. That
  is, distributional robustness is harder than adversarial robustness. In
  particular, this will allow us apply the lower bounds on adversarial accuracy
  obtained in Theorem \ref{thm:enflt} to distributional robustness as well!

  So, for $\lambda \ge 0$, consider the convex-conjugate of $(x,y) \mapsto 1_{x
    \in B(h,y)}$ with respect to the pseudo-metric $\tilde{d}$, namely
  \vspace{-.4cm}
$$
1_{x \in B(h,y)}^{\lambda \tilde{d}} := \sup_{(x',y') \in \mathcal X \times
  \mathcal Y}1_{x' \in B(h)}-\lambda \tilde{d}((x',y'), (x,y)).
  $$
  A straightforward computation gives
  \[
    \begin{split}
      &1_{x \in B(h,y)}^{\lambda \tilde{d}} := \sup_{(x',y') \in \mathcal X
        \times
        \mathcal Y}1_{x' \in B(h,y')}-\lambda \tilde{d}((x',y'), (x,y))\\
      &= \max_{B \in \{B(h,y),\;\mathcal X\setminus B(h,y)\}}\sup_{x' \in
        B}1_{x' \in B(h,y)}-\lambda d(x', x)\\
      &= \max(1-\lambda d(x,B(h,y)), -\lambda d(x,\mathcal X\setminus
      B(h,y)))\\
      &=(1 - \lambda d(x, B(h,y)))_+.
    \end{split}
  \]
  Now, since the transport cost function $\tilde{d}$ is nonnegative and
  lower-semicontinuous, 
  strong-duality holds~\cite{Villani,blanchet2016} and one has
  \begin{eqnarray*}
    \begin{split}
      &\sup_{W_1(Q, P) \le \epsilon} Q(h(X) \ne Y)
      \\
      &= \inf_{\lambda \ge 0}\sup_Q(Q(X \in B(h,Y))+\lambda
      (\epsilon-W_1(Q,P)))\\
      &=\inf_{\lambda \ge 0}\left(\sup_Q(Q(X \in B(h,Y))-\lambda
        W_1(Q,P)) + \lambda\epsilon\right)\\
      &=\inf_{\lambda \ge 0}(\mathbb E_{(x,y) \sim P}[1_{x \in B(h,y)}^{\lambda
        \tilde{d}}] + \lambda\epsilon)\\ &= \inf_{\lambda \ge 0}(\mathbb
      E_{(x,y) \sim P}[(1-\lambda d(x,
      B(h,y)))_+] + \lambda\epsilon)\\
      &= P(X \in B(h,Y)^{\lambda_*^{-1}}),
    \end{split}
  \end{eqnarray*}
  where $\lambda_* = \lambda_*(h) \ge 0$ is the (unique!) value of $\lambda$ at
  which the infimum is attained and we have used the previous computations and
  the handy formula
  \begin{eqnarray*}
    \begin{split}
      \sup_Q(Q(X \in B(h,Y))-\lambda W_1(Q,P)) &= \mathbb E_{P}[1_{X \in
        B(h,Y)}^{\lambda \tilde{d}}],
    \end{split}
  \end{eqnarray*}
  which is a direct consequence of Remark 1 of  ~\cite{blanchet2016}. Furthermore,
  by Lemma 2 of ~\cite{blanchet2016}, one has
  \begin{eqnarray*}
    \begin{split}
      \epsilon &\le \sum_k
      \pi_k\int_{B(h,k)^{\lambda_*^{-1}}}d(x,B(h,k))dP_{X|k}(x)\\
      &\le \sum_k \pi_k \lambda_*^{-1}P_{X|k}(X \in B(h,k)^{\lambda_*^{-1}})\\
      &= \lambda_*^{-1}P(X \in B(h,Y)^{\lambda_*^{-1}}) \le \lambda_*^{-1}.
    \end{split}
  \end{eqnarray*}
  Thus $\lambda_*^{-1} \ge \epsilon$ and combining with the previous
  inequalities gives

  \begin{eqnarray*}
    \begin{split}
      \sup_{Q \in \mathcal P(\mathcal X),\;W_1(Q, P) \le \epsilon} Q(h(X) \ne Y)
      &\ge P(X \in B(h,Y)^{\lambda_*^{-1}})\\
      &\ge P(X \in B(h,Y)^\epsilon).
    \end{split}
  \end{eqnarray*}
  Finally, noting that $\acc_\epsilon(h) = 1 - P(X \in B(h,Y)^\epsilon)$, one
  gets the claimed inequality $\widetilde{\acc}_\epsilon(h) \le
  \acc_\epsilon(h)$.
\end{proof}

\end{document}